\newtheorem{theorem}{Theorem}
\title{Universal Approximation Theorem for Input-Connected Multilayer Perceptrons}
\author{Vugar Ismailov\thanks{The author can be contacted at
\texttt{vugaris@mail.ru} or \texttt{vugaris@gmail.com}.}}
\date{\today}
\begin{document}

\maketitle

\begin{abstract}
We present the \emph{Input-Connected Multilayer Perceptron} (IC--MLP), a
feedforward neural network architecture in which each hidden neuron receives,
in addition to the outputs of the preceding layer, a direct affine connection
from the raw input. We first study this architecture in the univariate setting
and give an explicit and systematic description of IC--MLPs with an arbitrary
finite number of hidden layers, including iterated formulas for the network functions. 
In this setting, we prove a universal approximation theorem showing that deep IC--MLPs 
can approximate any continuous function on a closed interval of the real line if
and only if the activation function is nonlinear. We then extend the analysis
to vector-valued inputs and establish a corresponding universal approximation
theorem for continuous functions on compact subsets of $\mathbb{R}^n$.

\end{abstract}

\section{Introduction}

Multilayer perceptrons (MLPs) are widely used for function approximation,
regression, classification, and pattern recognition. An MLP consists of an
input layer, one or more hidden layers, and an output layer. Each neuron forms
a weighted sum of its inputs, adds a bias, and then applies an activation
function. From the approximation-theoretic point of view, an MLP generates a
family of parametrized functions, and the main question is how large this
family is.

A basic notion in this area is the \emph{universal approximation property}.
Roughly speaking, a class of networks has this property if it can approximate
every continuous function on a compact set with arbitrary accuracy. The first
results of this type were obtained for \emph{shallow} networks, i.e., networks
with a single hidden layer. For an input $\mathbf{x}\in\mathbb{R}^d$, a typical
single-hidden-layer network computes a function of the form
\begin{equation}\label{eq:shallow}
\sum_{i=1}^{r} c_i\,\sigma(\mathbf{w}_i\cdot\mathbf{x}+b_i),
\end{equation}
where $r\in\mathbb{N}$, $c_i,b_i\in\mathbb{R}$, $\mathbf{w}_i\in\mathbb{R}^d$,
and $\sigma:\mathbb{R}\to\mathbb{R}$ is the activation function. The
approximation question asks when the set of all functions of the form
\eqref{eq:shallow} is dense in $C(K)$ for every compact set $K\subset\mathbb{R}^d$.

\medskip

\noindent
\textbf{Shallow networks and universal approximation.}
The possibility of approximating continuous functions on compact subsets of
$\mathbb{R}^d$ by networks of the form \eqref{eq:shallow} has been studied in a
large number of works. Different approaches were used, including methods from
harmonic analysis, functional analysis, and classical approximation theory.

Carroll and Dickinson \cite{CarrollDickinson1989} used ideas related to the
inverse Radon transform to establish universality results for certain
single-hidden-layer neural networks. Gallant and White
\cite{GallantWhite1988} constructed a specific continuous, nondecreasing
\emph{sigmoidal} activation function (referred to as a ``cosine squasher'') and
showed that it yields density via Fourier series expansions. Here, a sigmoidal
function is understood as a function $\sigma:\mathbb{R}\to\mathbb{R}$ that
tends to $0$ as $x\to -\infty$ and to $1$ as $x\to +\infty$.

Cybenko \cite{Cybenko1989} and Funahashi \cite{Funahashi1989} independently
proved that feedforward networks with a continuous sigmoidal activation
function can approximate any continuous function on compact subsets of
$\mathbb{R}^d$. Cybenko's proof relies on methods from functional analysis,
particularly the Hahn--Banach theorem and the Riesz representation theorem, while
Funahashi's approach is based on an integral representation due to Irie and
Miyake \cite{IrieMiyake1988}, using a kernel that can be expressed as a
difference of two sigmoidal functions.

Hornik, Stinchcombe and White \cite{HornikStinchcombeWhite1989} provided another
general universality theorem for feedforward networks, based on an application
of the Stone--Weierstrass theorem. See also Cotter \cite{Cotter1990} for a
discussion of this theorem and its role in neural network approximation.

It is also important that universality can be obtained under various
restrictions and in constructive forms. K\r{u}rkov\'{a} \cite{Kurkova1992}
showed that staircase-like functions of sigmoidal type can approximate
continuous functions on compact intervals in $\mathbb{R}$, and used this idea
in related results for deeper networks (see
\cite{Kurkova1991,Kurkova1992}). Chen, Chen and Liu
\cite{ChenChenLiu1992} extended Cybenko's theorem by allowing bounded (not
necessarily continuous) sigmoidal activation functions. A closely related
result was obtained independently by Jones \cite{Jones1990}. 
Other works investigated the effect of restricting weights and thresholds: 
Chui and Li \cite{ChuiLi1992} treated integer weights and thresholds, while Ito
\cite{Ito1991} studied density results with unit weights. Constructive 
approximation schemes for special sigmoidal superpositions with a
minimal number of terms, as well as their application to single- and
two-hidden-layer networks with fixed weights, were developed in joint work
with Guliyev \cite{GuliyevIsmailov2016,GuliyevIsmailov2018a,GuliyevIsmailov2018b}.
A broader discussion of such restrictions, together with a comprehensive survey 
of related results, can be found in Pinkus \cite{Pinkus1999} 
and in our monograph \cite[Chapter~5]{Ismailov2021}.

A natural direction, beyond sigmoidal activations, is to ask for general
conditions on $\sigma$ guaranteeing density. Many works established universality
results for non-sigmoidal activation functions; see, for example,
Stinchcombe and White \cite{StinchcombeWhite1990}, Hornik \cite{Hornik1991}, and
Mhaskar and Micchelli \cite{MhaskarMicchelli1992}. A complete characterization
was obtained by Leshno, Lin, Pinkus and Schocken
\cite{LeshnoLinPinkusSchocken1993}: for a continuous activation function, the
family \eqref{eq:shallow} is dense in $C(K)$ for every compact
$K \subset \mathbb{R}^d$ if and only if $\sigma$ is not a polynomial. Thus, for
shallow networks, the universality question is, in a strong sense, completely
understood.

\medskip

\noindent
\textbf{From shallow to deep networks.}
While the classical theory focuses on \eqref{eq:shallow}, modern applications
often use several hidden layers. A standard two-hidden-layer feedforward
network computes a function of the form
\[
\sum_{j=1}^{N_2} v_j\,
\sigma\!\left(\sum_{i=1}^{N_1} w_{ji}\,\sigma(\mathbf{a}_i\cdot\mathbf{x}+b_i)
+ d_j\right),
\]
where $N_1,N_2\in\mathbb{N}$ and the remaining quantities are real parameters.
Deeper networks are obtained by iterating this construction: each additional
hidden layer applies an affine map to the outputs of the preceding layer and
passes the result through $\sigma$.

There are two related but distinct issues in the study of deep networks. The
first is \emph{universality}: does a given deep architecture approximate every
continuous function on compact sets? The second concerns \emph{efficiency}:
how the approximation error depends on the depth and width, or how many
parameters are needed to reach a given accuracy. The present paper is devoted
to the first issue, namely universality, and we briefly recall some
developments in this direction.

Universality can be achieved in many deep-network settings under explicit
assumptions on the activation function. To the best of our knowledge,
Gripenberg \cite{Gripenberg2003} was the first to study deep networks with a
bounded number of neurons in each hidden layer and to prove universality under
the assumption that the activation function is continuous, nonaffine, and
twice continuously differentiable in a neighborhood of a point $t$ with
$\sigma''(t)\neq 0$.

Kidger and Lyons \cite{KidgerLyons2020} later weakened this smoothness
requirement. They showed that universality for deep and narrow networks still
holds provided the activation function is continuous, nonaffine, and
differentiable in a neighborhood of at least one point at which the
derivative is continuous and nonzero. Thus, in both \cite{Gripenberg2003}
and \cite{KidgerLyons2020}, universality is obtained under local
differentiability assumptions on the activation function, without requiring
any global smoothness.

For piecewise linear activations, Hanin and Sellke~\cite{HaninSellke2018}
obtained a sharp characterization of universality for deep ReLU networks.
They proved that, for input dimension $d$, a ReLU network with arbitrary depth
can approximate every continuous function on compact subsets of $\mathbb{R}^d$
if and only if every hidden layer has width at least $d+1$ (that is, contains
at least $d+1$ neurons).

In a different but related direction, Johnson~\cite{Johnson2019} established a
purely negative universality result for a broad class of activation functions.
He proved that if the activation function is uniformly continuous and can be
approximated by a sequence of one-to-one functions, then layered feedforward
networks in which each hidden layer has width at most $d$ cannot approximate
all continuous functions on $\mathbb{R}^d$, regardless of depth. This result
does not assert that universality holds for width $d+1$ or larger; rather, it
shows that width strictly greater than $d$ is necessary for universal
approximation within this class of activation functions.

We refer the reader to Goodfellow, Bengio and Courville
\cite{GoodfellowBengioCourville2016} for background on neural network
architectures and motivations from machine learning, and to the recent
monograph by Petersen and Zech \cite{PetersenZech2024} for a mathematical
analysis of deep neural networks, with an emphasis on approximation theory,
optimization theory, and statistical learning theory.

\medskip

\noindent
\textbf{Motivation for input-connected architectures.}
In standard feedforward MLPs, the raw input $\mathbf{x}$ is used only in the
first hidden layer. After that, deeper layers receive information only through
the outputs of preceding layers. In this paper we present a simple and
natural modification: we allow each hidden neuron to access the raw input
directly, in addition to the outputs of the previous layer. This creates an
\emph{input-connected} architecture.

We call the resulting model the \emph{Input-Connected Multilayer Perceptron}
(IC--MLP). Formally, in an IC--MLP, every hidden neuron computes an expression
of the form
\[
\sigma\!\big(\text{(affine combination of previous-layer outputs)}
\;+\; \text{(affine function of the raw input)}\big),
\]
so that information from the input can enter at every depth. Standard MLPs are
recovered as a special case by suppressing these direct input connections
beyond the first hidden layer. In this sense, IC--MLPs form a strict extension
of the classical feedforward architecture, while preserving the basic
feedforward structure.

The main point of this paper is that this architecture admits a simple and
transparent universality theory. In particular, we obtain a universal 
approximation theorem under a minimal condition on the activation function.

\medskip

\noindent
\textbf{Overview of results.}
Our analysis proceeds in two stages. We first consider the univariate setting,
where the input is a scalar $x\in\mathbb{R}$. This case already captures the
structural feature that distinguishes IC--MLPs from classical MLPs, and it
allows a systematic description of networks of arbitrary finite depth. 
For completeness, we describe the network functions for one, two,
and three hidden layers, and then pass to the general $L$-hidden-layer
architecture in a unified notation.

In the univariate setting, we prove a universal approximation theorem that gives 
a necessary and sufficient condition: deep IC--MLPs approximate every continuous
function on a compact interval of the real line if and only if the activation
function is nonlinear (that is, not of the form $\sigma(t)=at+b$). This result parallels 
the classical ``if and only if'' characterizations known for shallow networks
(see, for instance, \cite{LeshnoLinPinkusSchocken1993}), but is established here
for the input-connected deep architecture under a strictly weaker condition on the activation function.

We then extend the analysis to vector inputs $\mathbf{x}\in\mathbb{R}^n$ and
show that the same equivalence holds for the approximation of continuous functions
on compact subsets of $\mathbb{R}^n$. Thus, IC--MLPs admit a universal
approximation theorem in the multivariate setting as well.

\medskip

\noindent
\textbf{Relation to existing architectures.}
Architectures that allow the raw input to be fed directly to deeper layers have 
appeared previously in different contexts. Most notably, Input Convex Neural Networks (ICNNs) 
\cite{AmosXuKolter2017} introduce so-called \emph{passthrough} connections that link the 
input directly to hidden units in deeper layers. In that setting, such connections are 
introduced to ensure that the network output is convex with respect to selected inputs. 
This convexity is enforced by requiring certain inter-layer weights to be nonnegative, 
and the direct passthrough connections compensate for the resulting restrictions.

Related ideas of bypassing intermediate layers have also been explored in other deep 
learning architectures, such as residual networks \cite{HeZhangRenSun2016} and densely 
connected convolutional networks \cite{HuangLiuMaatenWeinberger2017}. In those models, 
skip connections transmit information from earlier hidden layers to later hidden layers, 
while the raw input appears only at the first layer and does not enter deeper layers 
through independent affine maps.

Constructions involving so-called \emph{source channels} that push
the input values forward appear in the approximation-theoretic analysis
of deep ReLU networks \cite{DaubechiesDeVoreFoucartHaninPetrova2022}.
In that work, the authors introduce \emph{special networks} equipped with a
source channel (SC) and a collation channel (CC). The source channel
carries the input unchanged across all layers, ensuring that it remains
available at each stage of the construction, while the collation
channel aggregates the value of certain intermediate computations. 
These special networks are not standard feedforward networks, since neurons
in the source and collation channels may bypass the activation; they serve as an auxiliary
architecture used in proofs of approximation results and in deriving
important properties of ReLU networks.

The IC--MLP architecture studied in this paper differs from the above
approaches in several respects. It incorporates direct affine dependence
on the input at every layer as part of the network definition, rather
than propagating the input unchanged through a separate channel or
restricting its use to specific constructions. No convexity constraints
are imposed, and the networks are evaluated in a standard feedforward
manner. Moreover, the model allows for general activation functions,
not only ReLU. In this paper, we study it in the context of
universal approximation.

\section{Scalar IC--MLP Architecture}

In this section we describe IC--MLP
architecture in the special case of a \emph{scalar input}. Throughout this
section, the input variable is a real number $x\in\mathbb{R}$, and
$\sigma:\mathbb{R}\to\mathbb{R}$ denotes a fixed continuous activation function.
Although this setting is simpler than the vector-input case, it already
captures the essential structural features of IC--MLPs and serves as a
transparent model for the general theory developed later in this paper.

For completeness, we describe explicitly the
network architectures with zero, one, two, three, and $L$ hidden layers. This
layer-by-layer presentation is intended to clarify the role of direct input
connections at each depth and to highlight the structural differences between
IC--MLPs and classical feedforward networks.

\medskip

\textbf{Zero Hidden Layer (Depth 0).}
We begin with the simplest possible architecture, namely a network with no
hidden layers. In this case, the input signal $x$ is passed directly to the
output neuron, which applies an affine transformation. The resulting network
function has the form
\[
H_0(x)=c\,x+d,
\]
where $c,d\in\mathbb{R}$ are parameters of the output neuron. Such a network
computes only linear functions of the input and is therefore incapable of
approximating nonlinear functions. This trivial case is included
mainly for completeness and as a baseline for comparison with deeper networks.

\medskip

\textbf{Single Hidden Layer (Depth 1).}
We next consider an IC--MLP with a single hidden layer consisting of $N$
neurons. The input neuron outputs the raw input value $x$. Each hidden neuron
receives this input, applies an affine transformation, and then applies the
activation function $\sigma$. More precisely, the output of the $i$-th hidden
neuron is given by
\[
h_{1,i}=\sigma(a_{1,i}x+b_{1,i}), \qquad i=1,\dots,N,
\]
where $a_{1,i},b_{1,i}\in\mathbb{R}$ are parameters of that neuron.

The output neuron then combines all hidden-layer outputs and, in addition,
receives the raw input $x$ directly through an affine connection. The network
output is therefore
\[
H_1(x)=\sum_{i=1}^N v_i\,h_{1,i}+c\,x+d
=\sum_{i=1}^N v_i\,\sigma(a_{1,i}x+b_{1,i})+c\,x+d,
\]
where $v_i,c,d\in\mathbb{R}$ are output-layer parameters. Compared to a
classical single-hidden-layer MLP, the distinctive feature here is the
additional direct affine term $c\,x+d$ at the output.

\medskip

\textbf{Two Hidden Layers (Depth 2).}
We now move to an IC--MLP with two hidden layers. The first hidden layer has
$N_1$ neurons and is identical in structure to the single-hidden-layer case,
with outputs
\[
h_{1,i}=\sigma(a_{1,i}x+b_{1,i}), \qquad i=1,\dots,N_1.
\]
The second hidden layer consists of $N_2$ neurons. Each neuron in this layer
receives all outputs from the first hidden layer and, simultaneously, the raw
input $x$ through an affine connection. The output of the $j$-th neuron in the
second hidden layer is therefore
\[
h_{2,j}
=
\sigma\Big(
\sum_{i=1}^{N_1} w_{2,ji}\,h_{1,i}+a_{2,j}x+b_{2,j}
\Big),
\qquad j=1,\dots,N_2,
\]
where $w_{2,ji},a_{2,j},b_{2,j}\in\mathbb{R}$ are parameters.

As before, the output neuron combines all second-layer outputs and includes a
direct affine dependence on $x$. The overall network function is thus
\[
H_2(x)=
\sum_{j=1}^{N_2} v_j\,
\sigma\Big(
\sum_{i=1}^{N_1} w_{2,ji}\,
\sigma(a_{1,i}x+b_{1,i})
+a_{2,j}x+b_{2,j}
\Big)
+c\,x+d.
\]
This expression illustrates the defining feature of IC--MLPs:
at each depth, the outputs of the previous layer are combined 
with a direct affine term depending on the input.

\medskip

\textbf{Three Hidden Layers (Depth 3).}
The same construction extends naturally to deeper architectures. This case is 
included solely to make the recursive structure explicit. For three
hidden layers with $N_1$, $N_2$, and $N_3$ neurons, respectively, the first two
layers are defined as above, while the third hidden layer produces outputs
\[
h_{3,k}
=
\sigma\Big(
\sum_{j=1}^{N_2} w_{3,kj}\,h_{2,j}+a_{3,k}x+b_{3,k}
\Big),
\qquad k=1,\dots,N_3.
\]
The output neuron again aggregates all third-layer outputs together with a
direct affine contribution from $x$, yielding
\[
H_3(x)=
\sum_{k=1}^{N_3} v_k\,
\sigma\Big(
\sum_{j=1}^{N_2} w_{3,kj}\,
\sigma\Big(
\sum_{i=1}^{N_1} w_{2,ji}\,
\sigma(a_{1,i}x+b_{1,i})
+a_{2,j}x+b_{2,j}
\Big)
+a_{3,k}x+b_{3,k}
\Big)
+c\,x+d.
\]

\medskip

\textbf{General $L$-Hidden-Layer Scalar IC--MLP.}
In full generality, an IC--MLP with $L$ hidden layers is defined recursively.
If the $\ell$-th hidden layer contains $N_\ell$ neurons, then the output of the
$j$-th neuron in that layer is given by
\[
h_{\ell,j}
=
\sigma\Big(
\sum_{i=1}^{N_{\ell-1}} w_{\ell,ji}\,h_{\ell-1,i}
+a_{\ell,j}x+b_{\ell,j}
\Big),
\qquad \ell=1,\dots,L.
\]
Here, $h_{0,1}=x$ is understood as the output of the input neuron. The final
output of the network is computed by
\[
H_L(x)=\sum_{j=1}^{N_L} v_j\,h_{L,j}+c\,x+d.
\]
Equivalently, the function realized by an $L$-hidden-layer scalar IC--MLP can
be written in the nested form
\[
H_L(x)=
\sum_{j=1}^{N_L} v_j\,
\sigma\Big(
\sum_{i=1}^{N_{L-1}} w_{L,ji}\,
\sigma\big(\cdots \sigma(a_{1,i}x+b_{1,i})\cdots\big)
+a_{L,j}x+b_{L,j}
\Big)
+c\,x+d.
\]

This architecture is illustrated schematically in Figure~\ref{fig:ic-mlp-scalar}.

\medskip

\noindent\textbf{Remark.}
A standard feedforward scalar MLP is obtained as a special case of the scalar
IC--MLP architecture by suppressing all direct input connections beyond the
first hidden layer. In the present notation, this corresponds to imposing
\[
a_{\ell,j}=0
\quad \text{for all hidden layers } \ell\ge 2 \text{ and all neurons } j,
\quad\text{and}\quad c=0.
\]
Under this restriction, only the first hidden layer receives the raw input,
and the architecture reduces exactly to a classical scalar MLP. Consequently,
every function realizable by a standard scalar MLP is also realizable by a
scalar IC--MLP, while the converse does not hold in general.

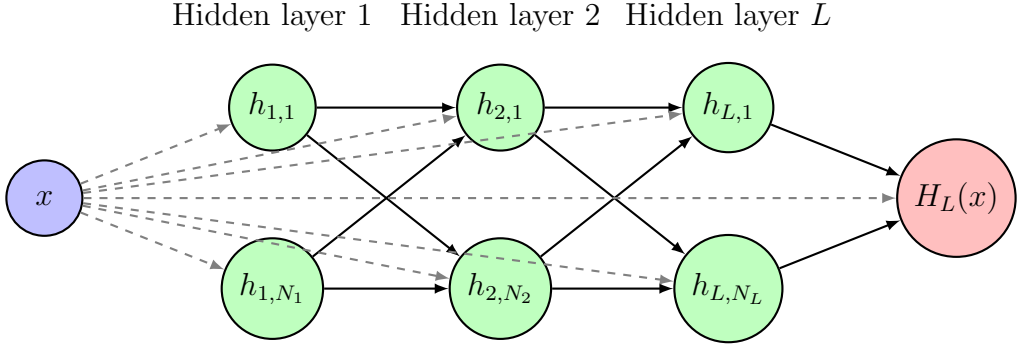
\begin{figure}[ht]
\centering
\begin{tikzpicture}[
    neuron/.style={circle, draw, thick, minimum size=10mm},
    input/.style={neuron, fill=blue!25},
    hidden/.style={neuron, fill=green!25},
    output/.style={neuron, fill=red!25},
    conn/.style={->, thick},
    skip/.style={->, thick, dashed, gray},
    >=latex
]

\node[input] (x) at (0,0) {$x$};

\node[hidden] (h11) at (3, 1.2) {$h_{1,1}$};
\node[hidden] (h12) at (3,-1.2) {$h_{1,N_1}$};

\node[hidden] (h21) at (6, 1.2) {$h_{2,1}$};
\node[hidden] (h22) at (6,-1.2) {$h_{2,N_2}$};

\node[hidden] (hL1) at (9, 1.2) {$h_{L,1}$};
\node[hidden] (hL2) at (9,-1.2) {$h_{L,N_L}$};

\node[output] (y) at (12,0) {$H_L(x)$};

\foreach \i in {h11,h12}
  \foreach \j in {h21,h22}
    \draw[conn] (\i) -- (\j);

\foreach \i in {h21,h22}
  \foreach \j in {hL1,hL2}
    \draw[conn] (\i) -- (\j);

\foreach \i in {hL1,hL2}
  \draw[conn] (\i) -- (y);

\foreach \h in {h11,h12,h21,h22,hL1,hL2}
  \draw[skip] (x) -- (\h);

\draw[skip] (x) -- (y);

\node at (3,2.4) {Hidden layer 1};
\node at (6,2.4) {Hidden layer 2};
\node at (9,2.4) {Hidden layer $L$};

\end{tikzpicture}

\caption{
Scalar Input-Connected Multilayer Perceptron (IC--MLP).
The scalar input $x$ is fed directly to every hidden layer and to the output
neuron (dashed arrows), in addition to the standard feedforward connections
between successive layers (solid arrows).
}
\label{fig:ic-mlp-scalar}
\end{figure}

\section{Universal Approximation Theorem for Univariate Functions}

In this section, we establish a universal approximation theorem for IC--MLPs
in the univariate setting. The theorem below characterizes precisely when
deep input connected neural networks are capable of approximating arbitrary continuous
functions on compact sets of the real line.

\begin{theorem}
Let $\sigma:\mathbb{R}\to\mathbb{R}$ be continuous. The following statements are equivalent:
\begin{enumerate}
    \item $\sigma$ is nonlinear.
    \item For every closed interval $[\alpha,\beta] \subset \mathbb{R}$, every continuous 
    function $f:[\alpha,\beta]\to\mathbb{R}$, and every $\varepsilon>0$, there exist 
    an integer $L\ge 1$ and parameters of an $L$-hidden-layer IC--MLP such that 
    the corresponding network function $H_L$ satisfies
    \[
    \sup_{x\in [\alpha,\beta]} |f(x)-H_L(x)| < \varepsilon.
    \]
\end{enumerate}

In other words, deep IC--MLP networks can approximate continuous functions on compact 
sets of $\mathbb{R}$ if and only if the activation function $\sigma$ is nonlinear.
\end{theorem}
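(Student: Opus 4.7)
The plan is to establish both directions of the equivalence.

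\textbf{Direction $(2) \Rightarrow (1)$.} I argue the contrapositive. If $\sigma(t) = \alpha t + \beta$ is affine, a routine induction on $\ell$ shows that every neuron output $h_{\ell, j}(x)$ is an affine function of $x$: the base case $h_{0, 1} = x$ is immediate, and in the inductive step the pre-activation $\sum_i w_{\ell, ji} h_{\ell - 1, i}(x) + a_{\ell, j} x + b_{\ell, j}$ is affine by hypothesis, and $\sigma$ preserves affinity. Consequently, $H_L(x)$ is always affine and cannot approximate any nonaffine continuous function, e.g.\ $f(x) = x^2$, uniformly on $[\alpha, \beta]$.

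\textbf{Direction $(1) \Rightarrow (2)$.} I split into two cases according to whether $\sigma$ is a polynomial. If $\sigma$ is continuous and \emph{nonpolynomial}, the classical Leshno--Lin--Pinkus--Schocken theorem \cite{LeshnoLinPinkusSchocken1993} gives density of the shallow family $\{\sum_i v_i \sigma(a_i x + b_i)\}$ in $C([\alpha, \beta])$; since this family is realized by depth-$1$ IC--MLPs (take $c = d = 0$), the conclusion holds with $L = 1$. If $\sigma$ is a polynomial of exact degree $d \ge 2$, I plan to show, by induction on $L$, the stronger statement that the set of depth-$L$ IC--MLP outputs equals $\mathcal{P}_{d^L}$, the space of polynomials of degree at most $d^L$. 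The base case $L = 1$ follows because the iterated finite differences $\Delta_h^m \sigma$ (for $m = 0, 1, \ldots, d$) are polynomials of exact degrees $d, d - 1, \ldots, 0$, so $\operatorname{span}\{\sigma(x + b) : b \in \mathbb{R}\} = \mathcal{P}_d$. For the inductive step, the input-connected structure is crucial: the pre-activation of a layer-$(L+1)$ neuron has exactly the form of a depth-$L$ IC--MLP output, and by combining parallel sub-networks with appropriate zero cross-weights, each neuron's pre-activation can be made to range independently over $\mathcal{P}_{d^L}$. Hence layer-$(L+1)$ outputs realize $\{\sigma(Q(x)) : Q \in \mathcal{P}_{d^L}\}$, and the depth-$(L+1)$ output is a linear combination of these plus an affine term.

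\textbf{Main obstacle.} The crux of the inductive step is verifying the spanning identity $\operatorname{span}\{\sigma(Q(x)) : Q \in \mathcal{P}_{d^L}\} = \mathcal{P}_{d^{L+1}}$. I plan a two-step argument. First, for any $R \in \mathcal{P}_{d^L}$, varying $\lambda$ in $\sigma(\lambda R) = \sum_{k=0}^d \alpha_k \lambda^k R^k$ over $d + 1$ distinct values gives, via Vandermonde and the fact that $\alpha_d \neq 0$, that $R^d$ lies in the span. Second, the classical polarization identity
\[
d!\, Q_1 Q_2 \cdots Q_d \;=\; \sum_{\emptyset \neq S \subseteq \{1, \ldots, d\}} (-1)^{d - |S|} \Big( \sum_{i \in S} Q_i \Big)^{d}
\]
expresses every $d$-fold product $Q_1 \cdots Q_d$ with $Q_i \in \mathcal{P}_{d^L}$ as a linear combination of $d$-th powers of elements of $\mathcal{P}_{d^L}$; and since every monomial $x^K$ with $K \le d^{L+1}$ admits a factorization $x^{k_1} \cdots x^{k_d}$ with each $k_i \le d^L$, such products already span $\mathcal{P}_{d^{L+1}}$. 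Combining the two steps closes the induction. Finally, since $\bigcup_L \mathcal{P}_{d^L}$ is the space of all polynomials, the Weierstrass approximation theorem delivers uniform $\varepsilon$-approximation of any $f \in C([\alpha, \beta])$ for sufficiently large $L$.
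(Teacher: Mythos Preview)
Your proof is correct, but it proceeds quite differently from the paper's. The paper gives a single unified argument for the sufficiency direction: it mollifies $\sigma(ax+b)$ to obtain a $C^\infty$ nonlinear function $S$ in the closure $\overline{\mathcal F}$, locates a point $x_0$ with $S''(x_0)\neq 0$, and uses the scaled second difference $T_\delta(x)=\bigl(S(x_0+\delta x)+S(x_0-\delta x)-2S(x_0)\bigr)/\bigl(\delta^2 S''(x_0)\bigr)$ to show $x^2\in\overline{\mathcal F}$; from there, the polarization $uv=\tfrac12[(u+v)^2-u^2-v^2]$ yields closure under multiplication and hence all polynomials, after which Weierstrass finishes. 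Your argument instead bifurcates on whether $\sigma$ is a polynomial. In the nonpolynomial case you outsource the work to the Leshno--Lin--Pinkus--Schocken theorem and conclude with depth~$1$ alone; in the polynomial case you prove the sharper \emph{exact} identity that depth-$L$ IC--MLPs realize precisely $\mathcal P_{d^L}$, via a Vandermonde extraction of $R^d$ from $\{\sigma(\lambda R)\}_\lambda$ and the degree-$d$ polarization identity. The trade-offs: the paper's route is self-contained and treats all nonlinear $\sigma$ uniformly without invoking an external density theorem, while your route is shorter in the nonpolynomial case (at the cost of citing a nontrivial black box) and in the polynomial case actually delivers more, namely a precise depth-versus-degree characterization rather than mere density. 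Both arguments ultimately hinge on polarization to pass from powers to products; the paper uses the quadratic version at the level of the closure, whereas you use the degree-$d$ version at the level of exact representation.
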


\begin{proof}
\noindent\textbf{Necessity.}
Assume that $\sigma$ is linear, that is,
\[
\sigma(t)=At+B.
\]
We show that every IC--MLP computes a linear function of the input $x$.

The input neuron outputs $x$, which is linear. Suppose that all neurons in some layer 
compute linear functions of $x$. Each neuron in the subsequent layer evaluates
\[
\sigma\!\left(\sum_k w_k h_k(x) + c x + d\right),
\]
where $h_k(x)$ are linear. The argument of $\sigma$ is therefore linear in $x$, 
and since $\sigma$ is linear, the output is again linear. By induction on the depth, 
every IC--MLP realizes a linear function. Consequently, no nonlinear continuous function 
can be approximated, and universality fails.

\medskip

\noindent\textbf{Sufficiency.}
Assume now that $\sigma$ is continuous and nonlinear.

Let $\mathcal{F}$ denote the set of all functions $\mathbb{R}\to\mathbb{R}$ 
realizable by finite IC--MLPs. By construction, $\mathcal{F}$ contains linear functions, 
and is closed under linear combinations, affine changes of variables, and superposition with $\sigma$.

Since $\sigma$ is nonlinear on $\mathbb{R}$, the function $\sigma(ax+b)$ is nonlinear
for every $a\neq 0$ and $b\in\mathbb{R}$. Indeed, if $\sigma(ax+b)$ were linear for some
$a\neq 0$, then composing with the inverse affine map $y\mapsto (y-b)/a$ would imply
that $\sigma$ itself is linear on $\mathbb{R}$, a contradiction.

Let $\varphi\in C_c^\infty(\mathbb{R})$ be a nonnegative function with 
$\int_{\mathbb{R}}\varphi(y)\,dy=1$,
and define $\varphi_\varepsilon(t)=\varepsilon^{-1}\varphi(t/\varepsilon)$. Fix some
$a\neq 0$ and $b\in\mathbb R$, and set
\[
S_\varepsilon(x):=(\sigma(ax+b)*\varphi_\varepsilon)(x)
=\int_{\mathbb R}\sigma(a(x-y)+b)\,\varphi_\varepsilon(y)\,dy .
\]
We claim that $S_\varepsilon\in\overline{\mathcal F}$. Indeed, each function 
$\sigma(a(x-y)+b)$ belongs to $\mathcal{F}$, and 
finite linear combinations of such functions are realizable by IC--MLPs. 
Moreover, since $\varphi_\varepsilon$ has compact support, the integral defining 
$S_\varepsilon$ can be approximated uniformly on any compact set $K\subset\mathbb{R}$ 
by Riemann sums of the form
\[
S_{\varepsilon,n}(x) = \sum_{k=1}^{n} \lambda_k \, \sigma(a(x-y_k)+b),
\]
where $\lambda_k\in\mathbb{R}$ and $y_k$ are points in the support of $\varphi_\varepsilon$. 
Each $S_{\varepsilon,n}$ is a finite linear combination of functions from $\mathcal{F}$, 
and 
\[
\sup_{x\in K} |S_{\varepsilon,n}(x) - S_\varepsilon(x)| \longrightarrow 0 \quad \text{as } n\to\infty.
\]

Thus $S_\varepsilon$ is the uniform limit on compact sets of such finite linear combinations, 
and therefore lies in the closure of $\mathcal{F}$ in $C(\mathbb{R})$ with the topology of 
uniform convergence on compact sets.

Moreover, $S_\varepsilon \in C^\infty(\mathbb{R})$ because it is the convolution of the 
function $\sigma(ax+b)$ with the smooth mollifier $\varphi_\varepsilon$. By the standard properties of mollifiers, 
\[
S_\varepsilon(x) \to \sigma(ax+b)
\]
uniformly on compact sets as $\varepsilon \to 0$.

For sufficiently small $\varepsilon$, the function $S_\varepsilon$ is nonlinear. 
Indeed, suppose by contradiction that there exists a sequence $\varepsilon_n \to 0$ 
such that $S_{\varepsilon_n}$ is linear for all $n$. Then the uniform limit on compact 
sets of $S_{\varepsilon_n}$ would be linear, which would imply that $\sigma(ax+b)$ is 
linear on $\mathbb{R}$. This contradicts the nonlinearity of $\sigma$. Hence, there exists 
$\varepsilon_0>0$ such that $S_\varepsilon$ is nonlinear for all $0<\varepsilon<\varepsilon_0$.

Fix one $\varepsilon>0$ such that $S:=S_\varepsilon$ is nonlinear. Then there
exists $x_0\in\mathbb R$ with $S''(x_0)\neq 0$. For $\delta>0$ define the scaled symmetric
second difference
\[
T_\delta(x):=\frac{S(x_0+\delta x)+S(x_0-\delta x)-2S(x_0)}{\delta^2\,S''(x_0)}.
\]

Let us first prove that the function $x\mapsto S(x_0+\delta x)$ belongs to
$\overline{\mathcal F}$. Since
$S\in\overline{\mathcal F}$, there exists a sequence $(F_n)_{n\ge1}\subset\mathcal F$
such that $F_n\to S$ uniformly on every compact subset of $\mathbb R$. Define
\[
G_n(x):=F_n(x_0+\delta x),\qquad x\in\mathbb R.
\]
Each function $G_n$ belongs to $\mathcal F$, because if $F_n$ is realized by an IC--MLP
with input $x$, then replacing the input by the affine map $x\mapsto x_0+\delta x$
(i.e.\ feeding $\delta x$ and adding the bias $x_0$ through the direct input connection)
yields an IC--MLP realizing $x\mapsto F_n(x_0+\delta x)$. 

Let $K\subset\mathbb R$ be compact and set
\[
K':=x_0+\delta K=\{x_0+\delta x:\ x\in K\},
\]
which is also compact. Then
\[
\sup_{x\in K}\bigl|G_n(x)-S(x_0+\delta x)\bigr|
=\sup_{x\in K}\bigl|F_n(x_0+\delta x)-S(x_0+\delta x)\bigr|
\le \sup_{y\in K'}|F_n(y)-S(y)|.
\]
Since $F_n\to S$ uniformly on $K'$, the right-hand side tends to $0$ as $n\to\infty$.
Thus $G_n\to S(x_0+\delta x)$ uniformly on $K$. Since $K$ was arbitrary, this shows that
$S(x_0+\delta x)\in\overline{\mathcal F}$. The same argument applies to
$S(x_0-\delta x)$.

We have therefore proved that both $S(x_0+\delta x)$ and $S(x_0-\delta x)$ belong to
$\overline{\mathcal F}$. Since $\overline{\mathcal F}$ is closed under linear combinations and contains
constants, it follows that $T_\delta\in\overline{\mathcal F}$ for every $\delta>0$.

We claim that $T_\delta\to x^2$ uniformly on compact sets as $\delta\to 0$.
Fix $M>0$ and assume $|x|\le M$. By Taylor's theorem with remainder (using $S\in C^3$),
\[
S(x_0\pm \delta x)
=S(x_0)\pm \delta x\,S'(x_0)+\frac{\delta^2x^2}{2}\,S''(x_0)
\pm \frac{\delta^3x^3}{6}\,S^{(3)}(\xi_\pm),
\]
for some $\xi_\pm$ between $x_0$ and $x_0\pm \delta x$. Adding the $+$ and $-$ expansions
cancels the linear terms and yields
\[
S(x_0+\delta x)+S(x_0-\delta x)-2S(x_0)
=\delta^2x^2 S''(x_0)+R_\delta(x),
\]
where
\[
|R_\delta(x)|
\le \frac{\delta^3|x|^3}{6}\bigl(|S^{(3)}(\xi_+)|+|S^{(3)}(\xi_-)|\bigr)
\le \frac{\delta^3M^3}{3}\sup_{|t-x_0|\le \delta M}|S^{(3)}(t)|.
\]
Therefore
\[
\sup_{|x|\le M}|T_\delta(x)-x^2|
\le \frac{\delta M^3}{3|S''(x_0)|}\sup_{|t-x_0|\le \delta M}|S^{(3)}(t)|
\longrightarrow 0
\qquad(\delta\to 0),
\]
which proves $T_\delta\to x^2$ uniformly on $[-M,M]$. Hence $x^2\in\overline{\mathcal F}$.

Since the function $x^2$ belongs to the closure of $\mathcal{F}$, for any $w$ in the closure, the composition
\[
w^2 = x^2 \circ w
\]
also belongs to the closure. Indeed, for $n\in\mathbb N$ set $K_n:=[-n,n]$.
Since $w$ is continuous, the set $w(K_n)$ is compact.
Since $x^2\in\overline{\mathcal F}$, we can choose $P_n\in\mathcal F$ such that
\[
\sup_{y\in w(K_n)} |P_n(y)-y^2|<\frac{1}{2n}.
\]
By continuity of $P_n$, the function $P_n$ is uniformly continuous on the compact set
\[
J_n:=\{y\in\mathbb R:\ \mathrm{dist}(y,w(K_n))\le 1\}.
\]
Choose $\delta_n>0$ with $\delta_n\le 1$ so that
\[
|P_n(y)-P_n(z)|<\frac{1}{2n}
\quad\text{whenever } y,z\in J_n\text{ and } |y-z|<\delta_n.
\]
Since $w\in\overline{\mathcal F}$, choose $w_n\in\mathcal F$ such that
\[
\sup_{x\in K_n} |w_n(x)-w(x)|<\delta_n.
\]
Define
\[
Q_n:=P_n\circ w_n.
\]
Then $Q_n\in\mathcal F$. Moreover, for $x\in K_n$ we have $w(x)\in w(K_n)\subset J_n$ and
\[
\mathrm{dist}\bigl(w_n(x),w(K_n)\bigr)\le |w_n(x)-w(x)|<\delta_n\le 1,
\]
hence $w_n(x)\in J_n$. Therefore,
\[
\begin{aligned}
|Q_n(x)-w(x)^2|
&=|P_n(w_n(x))-w(x)^2|\\
&\le |P_n(w_n(x))-P_n(w(x))|
   +|P_n(w(x))-w(x)^2|\\
&<\frac{1}{2n}+\frac{1}{2n}
=\frac{1}{n},
\end{aligned}
\]
and thus
\[
\sup_{x\in K_n}|Q_n(x)-w(x)^2|<\frac{1}{n}.
\]

Finally, let $K\subset\mathbb R$ be any compact set.
Then $K\subset K_N$ for some $N\in\mathbb N$.
For all $n\ge N$ we have $K\subset K_n$, so
\[
\sup_{x\in K}|Q_n(x)-w(x)^2|
\le \sup_{x\in K_n}|Q_n(x)-w(x)^2|
<\frac{1}{n}\xrightarrow[n\to\infty]{}0.
\]
Thus $Q_n\to w^2$ uniformly on $K$.
Since $K$ was arbitrary, $Q_n\to w^2$ uniformly on compact sets, i.e.\ $w^2\in\overline{\mathcal F}$.

Next, for any $u,v$ in the closure, their product can be expressed as
\[
uv = \frac{1}{2} \Big[ (u+v)^2 - u^2 - v^2 \Big],
\]
showing that the closure is closed under pointwise multiplication. Iterating this argument, 
it follows that all monomials $x^n$ and hence all polynomials in $x$ are contained in the closure of $\mathcal{F}$.

By the Weierstrass approximation theorem, polynomials are dense in $C([\alpha,\beta])$ 
for any closed interval $[\alpha,\beta] \subset \mathbb{R}$. Therefore, for any $f\in C([\alpha,\beta])$ 
and $\varepsilon>0$, there exists an IC--MLP function $H_L$ such that
\[
\sup_{x\in[\alpha,\beta]} |f(x)-H_L(x)| < \varepsilon.
\]

This completes the proof of the theorem.
\end{proof}

\section{Vector-Valued Inputs and Multivariate Universal Approximation}

\subsection{IC--MLP Architecture with Vector Inputs}

The IC--MLP architecture with vector inputs is a natural extension of the
scalar-input IC--MLPs considered earlier in this paper. Instead of a single real-valued
input, the network now processes an input vector
\[
\mathbf{x}=(x_1,\dots,x_n)\in\mathbb{R}^n.
\]
As in the scalar case, the architecture is built around a fixed continuous
activation function $\sigma:\mathbb{R}\to\mathbb{R}$. The defining feature
of IC--MLPs is that direct affine connections from the raw input are allowed
not only at the first hidden layer, but at every hidden layer of the network.

The input layer consists of $n$ neurons whose sole purpose is to output the
coordinate functions of the input vector. Thus, the signals made available
to the rest of the network are precisely
\[
\mathbf{x}\mapsto x_1,\dots,x_n.
\]
These coordinate functions serve as the basic building blocks from which all 
subsequent computations are formed.

The network then proceeds through a sequence of hidden layers. Suppose that
the $\ell$-th hidden layer contains $N_\ell$ neurons. Each neuron in this
layer receives two types of input. First, it receives all outputs from the
previous layer, denoted by
$h_{\ell-1,1},\dots,h_{\ell-1,N_{\ell-1}}$.
Second, it also receives the raw input vector $\mathbf{x}$ through an affine
mapping. More precisely, the $j$-th neuron in the $\ell$-th hidden layer
computes
\[
h_{\ell,j}
=
\sigma\!\Big(
\sum_{i=1}^{N_{\ell-1}} w_{\ell,ji} \, h_{\ell-1,i}
+ \langle \mathbf{a}_{\ell,j}, \mathbf{x} \rangle
+ b_{\ell,j}
\Big),
\]
where the coefficients $w_{\ell,ji}$ describe the inter-layer connections,
while $\mathbf{a}_{\ell,j}\in\mathbb{R}^n$ and $b_{\ell,j}\in\mathbb{R}$
parameterize the direct affine dependence on the input vector. 
For the first hidden layer, this formula is understood by setting
\(h_{0,i}=x_i\) for \(i=1,\dots,n\), so that the summation term represents an
affine function of the input vector~\(\mathbf{x}\).

After passing through $L$ hidden layers, the network produces its final
output via a single output neuron. This neuron aggregates all signals from
the last hidden layer and, in addition, incorporates a direct affine
dependence on the input vector. The output function computed by the network
is therefore given by
\[
H_L(\mathbf{x})
=
\sum_{j=1}^{N_L} v_j \, h_{L,j}
+ \langle \mathbf{c}, \mathbf{x} \rangle + d,
\]
where $v_j\in\mathbb{R}$ are the output-layer weights and
$\mathbf{c}\in\mathbb{R}^n$, $d\in\mathbb{R}$ determine the affine
contribution of the raw input at the output stage.

To make the structure more explicit, consider first the case of a single
hidden layer. In this situation, the network computes functions of the form
\[
H_1(\mathbf{x})
=
\sum_{j=1}^{N_1} v_j \,
\sigma\big(\langle \mathbf{a}_{1,j}, \mathbf{x} \rangle + b_{1,j}\big)
+ \langle \mathbf{c}, \mathbf{x} \rangle + d.
\]
Each hidden neuron applies the activation function to an affine function of
the input, and the output layer combines these responses together with an
additional affine term.

For two hidden layers, the structure becomes hierarchical. The first hidden
layer produces nonlinear functions of the input, while the second hidden
layer combines these functions with a further direct affine dependence on
$\mathbf{x}$. The resulting network function has the form
\[
H_2(\mathbf{x})
=
\sum_{j=1}^{N_2} v_j \,
\sigma\Big(
\sum_{i=1}^{N_1} w_{2,ji} \,
\sigma(\langle \mathbf{a}_{1,i}, \mathbf{x} \rangle + b_{1,i})
+ \langle \mathbf{a}_{2,j}, \mathbf{x} \rangle + b_{2,j}
\Big)
+ \langle \mathbf{c}, \mathbf{x} \rangle + d.
\]

In general, for $L$ hidden layers, the IC--MLP computes functions obtained by
iterating this construction. Each hidden layer combines the outputs of the
previous layer with an affine function of the original input vector, applies
the nonlinearity $\sigma$, and forwards the result to the next layer. The
resulting network function can be written as
\[
H_L(\mathbf{x})
=
\sum_{j=1}^{N_L} v_j \,
\sigma\Bigg(
\sum_{i=1}^{N_{L-1}} w_{L,ji} \,
\sigma\Big(\cdots
\sigma\big(\langle \mathbf{a}_{1,i}, \mathbf{x}\rangle + b_{1,i}\big)
\cdots\Big)
+ \langle \mathbf{a}_{L,j}, \mathbf{x} \rangle + b_{L,j}
\Bigg)
+ \langle \mathbf{c}, \mathbf{x} \rangle + d.
\]
This expression makes explicit that, unlike standard feedforward networks,
the raw input vector $\mathbf{x}$ may influence the computation at every
depth of the network.

An illustration of the corresponding vector-input architecture is given in
Figure~\ref{fig:ic-mlp-vector}.

\begin{figure}[ht]
\centering
\begin{tikzpicture}[
    neuron/.style={circle, draw, thick, minimum size=10mm}, 
    input/.style={neuron, fill=blue!25},
    hidden/.style={neuron, fill=green!25},
    output/.style={neuron, fill=red!25},
    conn/.style={->, thick},
    skip/.style={->, thick, dashed, gray},
    layer/.style={font=\small}
]

\node[input] (x1) at (0, 1.6) {$x_1$};
\node[input] (x2) at (0, 0.4) {$x_2$};
\node at (0,-0.5) {$\vdots$};
\node[input] (xn) at (0,-1.7) {$x_n$};

\node[layer] at (0,2.6) {Input};

\node[hidden] (h11) at (3, 2.0) {$h_{1,1}$};
\node[hidden] (h12) at (3, 0.8) {$h_{1,2}$};
\node[hidden] (h13) at (3,-0.4) {$h_{1,3}$};

\node[layer] at (3,2.9) {Hidden layer 1};

\node[hidden] (h21) at (6, 1.3) {$h_{2,1}$};
\node[hidden] (h22) at (6, 0.0) {$h_{2,2}$};

\node[layer] at (6,2.9) {Hidden layer 2};

\node at (8.2,0.7) {$\cdots$};

\node[output] (y) at (10,0.7) {$H_L(\mathbf{x})$};
\node[layer] at (10,2.9) {Output};

\foreach \i in {x1,x2,xn}{
  \foreach \j in {h11,h12,h13}{
    \draw[conn] (\i) -- (\j);
  }
}

\foreach \i in {h11,h12,h13}{
  \foreach \j in {h21,h22}{
    \draw[conn] (\i) -- (\j);
  }
}

\foreach \i in {h21,h22}{
  \draw[conn] (\i) -- (y);
}

\foreach \i in {x1,x2,xn}{
  \draw[skip] (\i) .. controls (4.5,-2.7) and (5.5,-2.7) .. (h21);
  \draw[skip] (\i) .. controls (4.8,-2.7) and (5.8,-2.7) .. (h22);
  \draw[skip] (\i) .. controls (7,-3.2) and (9,-2.2) .. (y);
}

\end{tikzpicture}

\caption{
Input-Connected Multilayer Perceptron (IC--MLP) with vector input.
The input $(x_1,\dots,x_n)$ is fed directly to every hidden layer and to the output
neuron (dashed arrows), in addition to the standard feedforward connections between
successive layers (solid arrows).
}
\label{fig:ic-mlp-vector}
\end{figure}
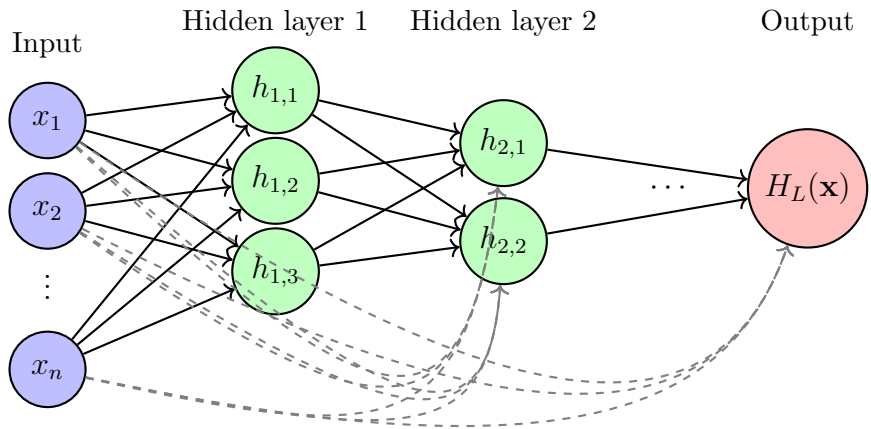

Finally, the relation between IC--MLPs and classical multilayer perceptrons
can be made precise. A standard feedforward MLP is obtained as a special case
of the IC--MLP architecture by suppressing all direct input connections
beyond the first hidden layer. In the present notation, this corresponds to
imposing the constraint
\[
\mathbf{a}_{\ell,j}=\mathbf{0}
\quad \text{for all neurons } j \text{ and all layers } \ell\ge 2,
\quad\text{and}\quad \mathbf{c}=\mathbf{0}.
\]
Under this restriction, only the first hidden layer receives the raw input,
and all deeper layers depend exclusively on the outputs of the preceding
layer, exactly as in a classical MLP. Consequently, every function realizable
by a standard MLP is also realizable by an IC--MLP. However, the converse is
not true in general: by allowing direct affine connections from the input at
every hidden layer, IC--MLPs form a strictly more expressive class of
networks.

\subsection{Universal Approximation Theorem for Multivariate Functions}

We now extend the universal approximation result from the univariate setting
to IC--MLPs with vector-valued inputs. The theorem below shows that 
the same criterion governs approximation in higher dimensions: deep
IC--MLP networks can approximate arbitrary continuous functions on compact
subsets of $\mathbb{R}^n$ if and only if the activation function is nonlinear.
The proof builds on the scalar case and exploits the additional structural
flexibility provided by direct input connections at every hidden layer.

\begin{theorem}
Let $\sigma:\mathbb{R}\to\mathbb{R}$ be continuous.
The following statements are equivalent:
\begin{enumerate}
\item $\sigma$ is nonlinear.
\item For every compact set $K\subset\mathbb{R}^n$, every
$f\in C(K)$, and every $\varepsilon>0$, there exist an integer $L\ge1$
and parameters of an $L$-hidden-layer IC--MLP such that
\[
\sup_{x\in K} |f(x)-H_L(x)| < \varepsilon.
\]
\end{enumerate}

That is, nonlinearity of the activation function is a necessary and sufficient
condition for IC--MLPs to approximate every continuous multivariate function on
compact sets.
\end{theorem}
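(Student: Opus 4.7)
The plan is to reduce the multivariate theorem to the univariate version already proved, via a lifting construction combined with a polarization identity and the multivariate Weierstrass theorem. The necessity direction is essentially identical to the scalar case: if $\sigma(t)=At+B$, a short induction on depth shows that every vector IC--MLP realizes only an affine function of $\mathbf{x}$, which cannot approximate any nonlinear continuous target.

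For sufficiency, let $\mathcal{F}$ denote the class of functions realized by vector IC--MLPs on $\mathbb{R}^n$, and let $\overline{\mathcal{F}}$ be its closure in the topology of uniform convergence on compact sets. My first step is a \emph{lifting lemma}: given any nonzero $\mathbf{w}\in\mathbb{R}^n$ and any scalar IC--MLP computing a function $F:\mathbb{R}\to\mathbb{R}$, one obtains a vector IC--MLP computing $\mathbf{x}\mapsto F(\mathbf{w}\cdot\mathbf{x})$ by replacing every scalar input coefficient $a_{\ell,j}\in\mathbb{R}$ in the scalar network by the vector $a_{\ell,j}\mathbf{w}\in\mathbb{R}^n$, and similarly for the output-layer coefficient $c$. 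Combining this lift with the univariate theorem applied to $g(t)=t^k$ on the compact interval $\{\mathbf{w}\cdot\mathbf{x}:\mathbf{x}\in K\}$, I would conclude that $(\mathbf{w}\cdot\mathbf{x})^k\in\overline{\mathcal{F}}$ for every $\mathbf{w}\in\mathbb{R}^n$ and every $k\ge 1$.

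The second step converts these ridge-type powers into individual multivariate monomials. By the multinomial expansion, $(\mathbf{w}\cdot\mathbf{x})^k=\sum_{|\alpha|=k}\binom{k}{\alpha}\mathbf{w}^\alpha\mathbf{x}^\alpha$, and I would choose $M_k=\binom{n+k-1}{k}$ vectors $\mathbf{w}_1,\dots,\mathbf{w}_{M_k}\in\mathbb{R}^n$ for which the matrix $(\mathbf{w}_i^\alpha)_{i,\alpha}$ is invertible; this is a standard polarization fact, equivalent to the linear independence of the evaluation functionals on the space of homogeneous polynomials of degree $k$. Inverting the resulting linear system expresses every monomial $\mathbf{x}^\alpha$ of total degree $k$ as an explicit finite linear combination of the functions $(\mathbf{w}_i\cdot\mathbf{x})^k$, hence $\mathbf{x}^\alpha\in\overline{\mathcal{F}}$. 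Constants and coordinate functions are directly realizable through the output-layer bias $d$ and weight vector $\mathbf{c}$, and $\overline{\mathcal{F}}$ is closed under finite linear combinations, so $\overline{\mathcal{F}}$ contains every polynomial in $x_1,\dots,x_n$. A final appeal to the multivariate Weierstrass approximation theorem on the compact set $K$ completes the argument.

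The main obstacle I expect is verifying the lifting lemma in detail: one must check that substituting $\mathbf{w}$-scaled vector weights at every hidden layer, including the direct-input connection at each neuron and at the output, respects the vector IC--MLP architecture and reproduces $F(\mathbf{w}\cdot\mathbf{x})$ exactly. This is precisely where the defining feature of IC--MLPs becomes essential, since for a standard MLP the raw input is unavailable at deeper layers and no such depth-preserving lift from scalar to vector networks would exist. Once the lift is in place, the polarization step is a routine linear-algebraic computation and the density conclusion follows immediately from Weierstrass.
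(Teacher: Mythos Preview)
Your argument is correct and takes a genuinely different route from the paper. The paper establishes a \emph{superposition} property: feeding the output of a vector IC--MLP $f\in\mathcal{F}_n$ into a scalar IC--MLP $g\in\mathcal{F}_1$ yields $g\circ f\in\mathcal{F}_n$; combined with $t\mapsto t^2\in\overline{\mathcal{F}_1}$ from Theorem~1 and the identity $uv=\tfrac12[(u+v)^2-u^2-v^2]$, this shows that $\overline{\mathcal{F}_n}$ is a subalgebra of $C(\mathbb{R}^n)$ containing the coordinate functions, and Stone--Weierstrass concludes. Your approach instead uses a \emph{lifting} from scalar to vector networks along a fixed direction $\mathbf{w}$, obtains all ridge powers $(\mathbf{w}\cdot\mathbf{x})^k$ directly from Theorem~1, and then recovers individual monomials by inverting a Veronese-type linear system degree by degree. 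The paper's route is more structural (it produces an algebra outright, and the composition step mirrors how depth is actually used); your route stays entirely within linear spans and never needs multiplicative closure, at the price of invoking the ridge-polarization fact for each homogeneous degree. Both arguments exploit the defining IC--MLP feature---direct input access at every layer---in essentially the same place: for the paper it is what makes $g\circ f$ again an IC--MLP, while for you it is what makes the scalar-to-vector lift exact at every depth.
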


\begin{proof}
\noindent\textbf{Necessity.}
Assume that $\sigma$ is linear, that is,
\[
\sigma(t)=A t + B
\]
for some constants $A,B\in\mathbb{R}$.

We show that every IC--MLP with vector input $\mathbf{x}\in\mathbb{R}^n$ computes an affine function
of $\mathbf{x}$.

The input layer outputs the coordinate functions
\[
\mathbf{x}\mapsto x_1,\dots,x_n,
\]
which are linear. Suppose that all neurons in some hidden layer $\ell-1$ compute affine functions of
$\mathbf{x}$.
A neuron $j$ in the $\ell$-th hidden layer computes
\[
h_{\ell,j}(\mathbf{x})
=
\sigma\!\Big(
\sum_{i=1}^{N_{\ell-1}} w_{\ell,ji} h_{\ell-1,i}(\mathbf{x})
+ \langle \mathbf{a}_{\ell,j}, \mathbf{x} \rangle
+ b_{\ell,j}
\Big).
\]
By the induction hypothesis, each $h_{\ell-1,i}(\mathbf{x})$ is affine in $\mathbf{x}$.
Therefore, the argument of $\sigma$ is an affine function of $\mathbf{x}$.
Since $\sigma$ itself is linear, the output $h_{\ell,j}(\mathbf{x})$ is again affine in $\mathbf{x}$.

By induction over the depth, every hidden neuron computes an affine function of $\mathbf{x}$.
The output neuron computes
\[
H_L(\mathbf{x})
=
\sum_{j=1}^{N_L} v_j h_{L,j}(\mathbf{x})
+ \langle \mathbf{c}, \mathbf{x} \rangle
+ d,
\]
which is affine in $\mathbf{x}$. Hence every IC--MLP realizes an affine function
$\mathbb{R}^n\to\mathbb{R}$.
Such functions cannot approximate arbitrary nonlinear continuous functions on compact sets,
so universality fails.

\medskip

\noindent\textbf{Sufficiency.}
Assume now that $\sigma$ is continuous and nonlinear.

We denote by $\mathcal{F}_n$ the set of all functions
$\mathbb{R}^n\to\mathbb{R}$ realizable by finite IC--MLPs, and by
$\overline{\mathcal{F}_n}$ its closure in the topology of uniform
convergence on compact sets.

First, constant functions belong to $\mathcal{F}_n$.
Indeed, choosing $v_j=0$ for all $j$ and $\mathbf{c}=\mathbf{0}$ in the
output layer yields $H_L(\mathbf{x})\equiv d$.
Moreover, every affine function
\[
\mathbf{x}\mapsto \langle \mathbf{a},\mathbf{x}\rangle + b
\]
belongs to $\mathcal{F}_n$ by taking a network with no hidden layers and
setting $\mathbf{c}=\mathbf{a}$ and $d=b$.
In particular, each coordinate projection $\mathbf{x}\mapsto x_i$
belongs to $\mathcal{F}_n$.

We now verify that $\mathcal{F}_n$ is closed under linear combinations.
Fix $L\ge 0$ and consider the class $\mathcal{F}_{n,L}$ of functions realized
by IC--MLPs with exactly $L$ hidden layers. Since 
the output neuron computes the sum of a linear combination of the outputs
of the last hidden layer and an affine function of the input,
$\mathcal{F}_{n,L}$ is a linear space.

Furthermore, if $M<L$, then $\mathcal F_{n,M}\subset\mathcal F_{n,L}$.
Indeed, the defining formula of an $L$-hidden-layer IC--MLP contains that of
an $M$-hidden-layer IC--MLP as a special case, obtained by setting the
inter-layer weights of the additional hidden layers equal to zero.
Consequently, for any $f,g\in\mathcal{F}_n$ there exists $L$ such that both
$f$ and $g$ belong to $\mathcal{F}_{n,L}$, and hence any linear combination
$\alpha f+\beta g$ is again realized by an IC--MLP with $L$ hidden layers. 
Therefore, $\mathcal{F}_n$ is closed under linear combinations.

We next explain the stability of $\mathcal{F}_n$ under superposition with scalar
IC--MLPs. 

Let $f\in\mathcal{F}_n$ and let $g:\mathbb{R}\to\mathbb{R}$ belong to
$\mathcal{F}_1$ or to its closure $\overline{\mathcal{F}_1}$.
If $g\in\mathcal{F}_1$, then $g$ is realized by a finite IC--MLP with
one-dimensional input. By feeding the scalar output of the IC--MLP realizing $f$
into the input layer of the network realizing $g$, we obtain a finite IC--MLP that
realizes the composition
\[
\mathbf{x}\longmapsto g\big(f(\mathbf{x})\big),
\]
and hence $g\circ f$ belongs to $\mathcal{F}_n$.

If instead $g\in\overline{\mathcal{F}_1}$, let $K\subset\mathbb{R}^n$ be compact.
Since $f$ is continuous, the image $f(K)\subset\mathbb{R}$ is compact.
There exists a sequence $\{g_m\}\subset\mathcal{F}_1$ such that
\[
\sup_{y\in f(K)} |g_m(y)-g(y)| \longrightarrow 0
\quad \text{as } m\to\infty.
\]
For each $m$, the function $\mathbf{x}\mapsto g_m(f(\mathbf{x}))$ belongs to
$\mathcal{F}_n$, and
\[
\sup_{\mathbf{x}\in K}
\big|
g_m(f(\mathbf{x}))-g(f(\mathbf{x}))
\big|
=
\sup_{y\in f(K)} |g_m(y)-g(y)|
\longrightarrow 0
\quad \text{as } m\to\infty.
\]
Hence $g\circ f\in\overline{\mathcal{F}_n}$.

We now use the scalar universality result.
By Theorem~1, since $\sigma$ is continuous and nonlinear, the closure of
$\mathcal{F}_1$ coincides with $C(\mathbb{R})$.
In particular, the scalar function
\[
t\longmapsto t^2
\]
belongs to $\overline{\mathcal{F}_1}$.
Let $\{s_m\}\subset\mathcal{F}_1$ be a sequence such that
$s_m(t)\to t^2$ uniformly on compact subsets of $\mathbb{R}$.

Let $f\in\overline{\mathcal{F}_n}$ and let $K\subset\mathbb{R}^n$ be
compact.
By definition of the closure, there exists a sequence
$\{f_m\}\subset\mathcal{F}_n$ such that $f_m\to f$ uniformly on $K$.
By the superposition property established above, the functions
$s_m(f_m(\mathbf{x}))$ belong to $\mathcal{F}_n$.
Passing to the limit yields $f^2\in\overline{\mathcal{F}_n}$.
In particular, since each coordinate projection
$\mathbf{x}\mapsto x_i$ belongs to $\mathcal{F}_n$, the function
\[
\mathbf{x}\mapsto x_i^2
\]
lies in $\overline{\mathcal{F}_n}$.

Let now $u,v\in\overline{\mathcal{F}_n}$.
Since $u^2$, $v^2$, and $(u+v)^2$ belong to $\overline{\mathcal{F}_n}$,
the identity
\[
uv=\tfrac12\big[(u+v)^2-u^2-v^2\big]
\]
implies that $uv\in\overline{\mathcal{F}_n}$.
Thus $\overline{\mathcal{F}_n}$ is an algebra containing constants and the
coordinate functions.
By repeated multiplication and linear combination, all monomials
$x_1^{m_1}\cdots x_n^{m_n}$ and hence all polynomials in
$x_1,\dots,x_n$ belong to $\overline{\mathcal{F}_n}$.

Finally, let $K\subset\mathbb{R}^n$ be compact.
By the Stone--Weierstrass theorem, polynomials are dense in $C(K)$.
Since $\overline{\mathcal{F}_n}$ is closed under uniform convergence on
compact sets and contains all polynomials, it follows that for every
$f\in C(K)$ and every $\varepsilon>0$ there exists an IC--MLP function
$H_L$ such that
\[
\sup_{\mathbf{x}\in K}|f(\mathbf{x})-H_L(\mathbf{x})|<\varepsilon.
\]

This completes the proof.
\end{proof}

\medskip

\noindent\textbf{Remark.}
It is useful to make explicit what we mean by a \emph{function realizable by a neural
network} and to highlight a structural distinction between IC--MLPs and 
standard MLPs from an approximation-theoretic point of view.

For each integer $L\ge 0$, let $\mathcal M_{n,L}$ denote the set of functions
$\mathbb R^n\to\mathbb R$ that can be written exactly in the explicit form of an
$L$-hidden-layer standard MLP with activation $\sigma$.
We then define
\[
\mathcal M_n := \bigcup_{L\ge 0} \mathcal M_{n,L},
\]
so that a function $f$ belongs to $\mathcal M_n$ if there exist a finite depth
$L$ and corresponding network parameters such that
\[
f(\mathbf{x}) = H_L(\mathbf{x}) \quad \text{for all } \mathbf{x}\in\mathbb R^n.
\]
Thus, a function is said to be realizable if it coincides pointwise with the
single finite-depth network function $H_L \in \mathcal M_{n,L}$.

With this interpretation, the class of functions produced by IC--MLPs enjoys a closure
property. Since IC--MLPs retain direct access
to the input vector $\mathbf{x}$ at every hidden layer and at the output layer, 
this class is closed under finite linear combinations: if two functions
are realized by IC--MLPs, then any linear combination of them is again realized by a
single IC--MLP of finite depth. This property follows directly from the architecture and
does not depend on the specific choice of activation function.

In contrast, for standard deep MLPs, the class $\mathcal M_n$
defined above is, in general, not closed under linear combinations.
Although individual functions such as $x$ (realizable by a depth--$0$
network) and $\sigma(x)$ (realizable by a depth--$1$ network) both
belong to $\mathcal M_n$, their sum $x+\sigma(x)$ need not be
representable by any single standard MLP of finite depth.

The obstruction is structural. If $f\in\mathcal M_{n,L_f}$ and
$g\in\mathcal M_{n,L_g}$, there need not exist a depth $L$ such that
both $f$ and $g$ belong to $\mathcal M_{n,L}$. In particular,
affine functions are realizable only at depth $0$, and there is no
general mechanism for lifting such functions to networks with positive
depth. Consequently, functions realized at different depths cannot 
always be represented by a network of common depth, and closure under
linear combinations fails.

From an approximation-theoretic perspective, the structural difference between
MLPs and IC--MLPs is significant. The closure under linear combinations enjoyed by
IC--MLPs allows one to construct rich classes of functions in a direct and
transparent way. It also explains why IC--MLPs admit a simple and complete
universality criterion, whereas universality results for standard MLPs typically
require more intricate arguments and/or stronger assumptions on the activation
function.

\section{Discussion and Conclusions}

In the IC--MLP framework, each neuron in a hidden layer receives signals from all
neurons in the previous layer together with a direct affine contribution from
the raw input. This input-connectedness across depth leads to a simple
recursive description of the network function. From an architectural viewpoint, this design
strictly generalizes classical feedforward multilayer perceptrons, which are
recovered by suppressing all direct input connections beyond the first hidden
layer.

Unlike architectures in which direct input connections are introduced to meet
additional structural constraints (for example, convexity requirements in
input-convex neural networks), the IC--MLP employs input-connectedness as a
purely expressive mechanism.
No restrictions are imposed on the signs of the weights,
and no convexity assumptions are made. As a consequence, the zero-hidden-layer
case reduces to a linear model, while networks with one or more hidden layers
and a nonlinear activation function exhibit genuinely nonlinear behavior. The
recursive formulas for $H_L$ precisely characterize the class of functions
realized by the architecture and make explicit how the raw input influences the
computation at every depth.

From the approximation-theoretic viewpoint, the main contribution of this work
is to identify a minimal condition on the activation function that characterizes
universality for IC--MLPs. We first establish this result in the scalar-input
setting, where the structure of the network can be analyzed more easily, and
then extend it to multivariate inputs. We show that, for input-connected
architectures, nonlinearity of the activation function
is both necessary and sufficient for universal approximation. No additional
assumptions such as smoothness, monotonicity, or special structural properties
of the activation are required.

Finally, the present work leaves open several natural questions that merit
further investigation. While we establish a complete universality criterion for
IC--MLPs, our results are qualitative in nature and do not address quantitative
aspects of approximation. In particular, problems concerning approximation
rates, the role of depth and width in controlling the approximation error, and
the number of neurons required to achieve a prescribed accuracy remain open.
Another natural direction is a systematic comparison of approximation
efficiency between IC--MLPs and classical deep architectures.

\end{document}